\title{Serving Time: Real-Time, Safe Motion Planning and Control for Manipulation of Unsecured Objects}
\author{Zachary Brei$^1$, Jonathan Michaux$^2$, Bohao Zhang$^2$, Patrick Holmes$^2$, Ram Vasudevan$^1$
\thanks{$^{1}$Mechanical Engineering, University of Michigan, Ann Arbor, MI $\langle$\texttt{breizach, ramv}$\rangle$\texttt{@umich.edu}.}
\thanks{$^{2}$Robotics, University of Michigan, Ann Arbor, MI $\langle$\texttt{jmichaux, jimzhang, pdholmes, ramv} $\rangle$\texttt{@umich.edu}.}
}
\begin{document}
\newif\ifcommentson
\commentsontrue
\newcounter{PatrickCount}
\addtocounter{PatrickCount}{1}
\newcommand{\pat}[1]{\textcolor{OliveGreen}{\ifcommentson\textbf{(\thePatrickCount) PH}: (#1)\fi}\addtocounter{PatrickCount}{1}}

\newcounter{JonCount}
\addtocounter{JonCount}{1}
\newcommand{\jon}[1]{\textcolor{Maroon}{\ifcommentson\textbf{(\theJonCount) JM}: (#1)\fi}\addtocounter{JonCount}{1}}

\newcounter{BohaoCount}
\addtocounter{BohaoCount}{1}
\newcommand{\bohao}[1]{\textcolor{Orange}{\ifcommentson\textbf{(\theBohaoCount) BZ}: (#1)\fi}\addtocounter{BohaoCount}{1}}

\newcounter{ThoughtCount}
\addtocounter{ThoughtCount}{1}
\newcommand{\thought}[1]{\textcolor{Blue}{\ifcommentson\textbf{(\theThoughtCount) Outline}: (#1)\fi}\addtocounter{ThoughtCount}{1}}

\newcounter{ShreyCount}
\addtocounter{ShreyCount}{1}
\newcommand{\shrey}[1]{{\textcolor{RoyalBlue}{\ifcommentson\textbf{(\theShreyCount) SK}: (#1)\fi}\addtocounter{ShreyCount}{1}}}

\newcounter{RamCount}
\addtocounter{RamCount}{1}
\newcommand{\Ram}[1]{\textcolor{WildStrawberry}{\ifcommentson\textbf{(\theRamCount) RV-M}: (#1)\fi}\addtocounter{RamCount}{1}}

\newcounter{ZacCount}
\addtocounter{ZacCount}{1}
\newcommand{\Zac}[1]{\textcolor{Cyan}{\ifcommentson\textbf{(\theZacCount) ZB}: (#1)\fi}\addtocounter{ZacCount}{1}}

\newcounter{XunCount}
\addtocounter{XunCount}{1}
\newcommand{\Xun}[1]{\textcolor{lime}{\ifcommentson\textbf{(\theXunCount) XF}: (#1)\fi}\addtocounter{XunCount}{1}}

\newcounter{ElenaCount}
\addtocounter{ElenaCount}{1}
\newcommand{\Elena}[1]{\textcolor{violet}{\ifcommentson\textbf{(\theElenaCount) ES}: (#1)\fi}\addtocounter{ElenaCount}{1}}

\newcounter{FixCount}
\addtocounter{FixCount}{1}
\newcommand{\fix}[1]{\textcolor{Purple}{\ifcommentson\textbf{(\theFixCount) FIX}: (#1)\fi}\addtocounter{FixCount}{1}}

\newtheorem{defn}{Definition}
\newtheorem{rem}[defn]{Remark}
\newtheorem{lem}[defn]{Lemma}
\newtheorem{prop}[defn]{Proposition}
\newtheorem{assum}[defn]{Assumption}
\newtheorem{ex}[defn]{Example}
\newtheorem{thm}[defn]{Theorem}
\newtheorem{cor}[defn]{Corollary}
\newtheorem{con}[defn]{Conjecture}
\newtheorem{problem}[defn]{Problem}

\providecommand{\R}{\ensuremath \mathbb{R}}
\providecommand{\IR}{\ensuremath \mathbb{IR}}
\providecommand{\N}{\ensuremath \mathbb{N}}
\providecommand{\Q}{\ensuremath \mathbb{Q}}
\newcommand{\unitcircle}{\mathbb{S}^1}

\newcommand{\regtext}[1]{\mathrm{\textnormal{#1}}}
\newcommand{\ol}[1]{\overline{#1}}
\newcommand{\ul}[1]{\underline{#1}}
\newcommand{\defemph}[1]{\emph{#1}}
\newcommand{\ts}[1]{\textsuperscript{#1}}

\newcommand{\comp}{^{\regtext{C}}}
\newcommand{\card}[1]{\left\vert#1\right\vert}
\newcommand{\proj}{\regtext{proj}}
\newcommand{\norm}[1]{\left\Vert#1\right\Vert}
\newcommand{\abs}[1]{\left\vert#1\right\vert}
\newcommand{\pow}[1]{\mathcal{P}\!\left(#1\right)}
\newcommand{\diag}[1]{\regtext{diag}\!\left(#1\right)}
\newcommand{\eig}[1]{\regtext{eig}\!\left(#1\right)}
\newcommand{\union}{\bigcup}
\newcommand{\intersection}{\bigcap}
\newcommand{\trans}{^\top}
\newcommand{\inv}{^{-1}}
\newcommand{\pinv}{^{\dagger}}
\newcommand{\sign}{\regtext{sign}}
\newcommand{\expm}{\regtext{exp}}
\newcommand{\logm}{\regtext{log}}
\newcommand{\skw}{_{\times}}
\newcommand{\bigO}{\mathcal{O}}
\newcommand{\bdry}[1]{\regtext{bd}\!\left(#1\right)}
\renewcommand{\ker}[1]{\regtext{ker}\!\left(#1\right)}
\newcommand{\convhull}[1]{\regtext{CH}\!\left(#1\right)}

\newcommand{\lbl}[1]{_{\regtext{#1}}}
\newcommand{\lo}{\lbl{lo}}
\newcommand{\hi}{\lbl{hi}}

\newcommand{\emptyarr}{[\ ]}
\newcommand{\zeros}{\textit{0}}
\newcommand{\ones}{\textit{1}}
\newcommand{\eye}{\regtext{\textit{I}}}


\newcommand{\interval}[1]{[ #1 ]}
\newcommand{\iv}[1]{[ #1 ]}
\newcommand{\nom}[1]{#1}
\newcommand{\pz}[1]{\mathbf{#1}}
\newcommand{\pzgreek}[1]{\bm{#1}}
\newcommand{\PZ}[1]{\mathcal{PZ}\left(#1\right)}
\newcommand{\pzk}[1]{\pz{ #1 ; k }}
\newcommand{\pzi}[1]{\pz{ #1 }(\pz{T_i};\pz{K})}
\newcommand{\pzki}[1]{\pz{ #1 }(\pz{T_i};k)}
\newcommand{\setop}[1]{{\mathrm{\texttt{#1}}}}
\newcommand{\lb}[1]{\underline{#1}}
\newcommand{\ub}[1]{\overline{#1}}


\newcommand{\qjnot}{q_j}
\newcommand{\qdjnot}{\dot{q}_{j}}
\newcommand{\qddjnot}{\ddot{q}_{j}}
\newcommand{\qajnot}{q_{a, j}}
\newcommand{\qajdotnot}{\dot{q}_{a, j}}
\newcommand{\qajddotnot}{\ddot{q}_{a, j}}

\newcommand{\pzqi}{\pzi{q}}
\newcommand{\pzqAi}{\pzi{\qA}}
\newcommand{\pzqdi}{\pzi{\dot{q}}}
\newcommand{\pzqdai}{\pzi{\dot{q}_{a}}}
\newcommand{\pzqddi}{\pzi{\ddot{q}}}
\newcommand{\pzqddai}{\pzi{\ddot{q}_{a}}}
\newcommand{\pzqdesi}{\pzi{q_{d}}}
\newcommand{\pzqddesi}{\pzi{\dot{q}_{d}}}
\newcommand{\pzqdddesi}{\pzi{\ddot{q}_{d}}}
\newcommand{\pzqdeski}{\pzki{q_{d}}}
\newcommand{\pzqddeski}{\pzki{\dot{q}_{d}}}
\newcommand{\pzqdddeski}{\pzki{\ddot{q}_{d}}}
\newcommand{\pzui}{\pzki{u}}
\newcommand{\pzqki}{\pzki{q}}
\newcommand{\pzqAki}{\pzki{\qA}}
\newcommand{\pzqdki}{\pzki{\dot{q}}}
\newcommand{\pzuki}{\pzki{u}}

\newcommand{\pzqji}{\pzi{q_j}}
\newcommand{\pzqAji}{\pzi{\qA_j}}
\newcommand{\pzqli}{\pzi{q_l}}
\newcommand{\pzqdji}{\pzi{\dot{q}_j}}
\newcommand{\pzqdaji}{\pzi{\dot{q}_{a,j}}}
\newcommand{\pzqddji}{\pzi{\ddot{q}_j}}
\newcommand{\pzqddaji}{\pzi{\ddot{q}_{a,j}}}
\newcommand{\pzqdesji}{\pzi{q_{d,j}}}
\newcommand{\pzqddesji}{\pzi{\dot{q}_{d,j}}}
\newcommand{\pzqdddesji}{\pzi{\ddot{q}_{d,j}}}
\newcommand{\pzqdesjki}{\pzki{q_{d,j}}}
\newcommand{\pzqddesjki}{\pzki{\dot{q}_{d,j}}}
\newcommand{\pzqdddesjki}{\pzki{\ddot{q}_{d,j}}}
\newcommand{\pzuji}{\pzki{u_j}}
\newcommand{\pzqjki}{\pzki{q_j}}
\newcommand{\pzqdjki}{\pzki{\dot{q}_j}}
\newcommand{\pzujKi}{\pz{u}(\pzqAi, \nomparams, \intparams)}
\newcommand{\pzujki}{\pz{u}(\pzqAki, \nomparams, \intparams)}
\newcommand{\pzFKjki}{\pz{FK_j}(\pzqki)}
\newcommand{\pzFOjki}{\pz{FO_j}(\pzqki)}
\newcommand{\pzFKjKi}{\pz{FK_j}(\pzqi)}
\newcommand{\pzFOjKi}{\pz{FO_j}(\pzqi)}

\newcommand{\pzpboundj}{\bm{\epsilon}_{\mathbf{p, j}}}
\newcommand{\pzvboundj}{\bm{\epsilon}_{\mathbf{v, j}}}

\newcommand{\pzg}{g}
\newcommand{\pzv}{x}
\newcommand{\pze}{\alpha}
\newcommand{\pzn}{{n_g}}
\newcommand{\pzgi}{g_i}
\newcommand{\pzei}{\alpha_i}

\newcommand{\q}{q(t)}
\newcommand{\qd}{\dot{q}(t)}
\newcommand{\qdd}{\ddot{q}(t)}
\newcommand{\qa}{q_a(t)}
\newcommand{\qadot}{\dot{q}_a(t)}
\newcommand{\qaddot}{\ddot{q}_a(t)}
\newcommand{\qak}{q_a(t; k)}
\newcommand{\qakdot}{\dot{q}_a(t; k)}
\newcommand{\qakddot}{\ddot{q}_a(t; k)}
\newcommand{\qdes}{q_d(t)}
\newcommand{\qdesdot}{\dot{q}_d(t)}
\newcommand{\qdesddot}{\ddot{q}_d(t)}
\newcommand{\qdesk}{q_d(t; k)}
\newcommand{\qdeskdot}{\dot{q}_d(t; k)}
\newcommand{\qdeskddot}{\ddot{q}_d(t; k)}


\newcommand{\qj}{q_j(t)}
\newcommand{\ql}{q_l(t)}
\newcommand{\qdj}{\dot{q}_{j}(t)}
\newcommand{\qddj}{\ddot{q}_{j}(t)}
\newcommand{\qaj}{q_{a, j}(t)}
\newcommand{\qajdot}{\dot{q}_{a, j}(t)}
\newcommand{\qajddot}{\ddot{q}_{a, j}(t)}
\newcommand{\qdesj}{q_{d, j}(t)}
\newcommand{\qdesjdot}{\dot{q}_{d, j}(t)}
\newcommand{\qdesjddot}{\ddot{q}_{d, j}(t)}
\newcommand{\qdeskj}{q_{d, j}(t; k)}
\newcommand{\qdeskjdot}{\dot{q}_{d, j}(t; k)}
\newcommand{\qdeskjddot}{\ddot{q}_{d, j}(t; k)}

\newcommand{\nq}{n_q}
\newcommand{\nne}{n_e}
\newcommand{\nt}{n_t}
\newcommand{\nf}{n_f}
\newcommand{\Nq}{ N_q }
\newcommand{\Ne}{ N_e }
\newcommand{\Nt}{ N_t }

\newcommand{\err}{e}
\newcommand{\errdot}{\dot{e}}
\newcommand{\errdotj}{\dot{e}_j}
\newcommand{\errj}{e_j}
\newcommand{\errjdot}{\dot{e}_j}
\newcommand{\errddot}{\ddot{e}}
\newcommand{\roblyap}{V(\qA(t),\Delta)}
\newcommand{\robh}{h(\qA(t),\Delta)}
\newcommand{\robv}{v}
\newcommand{\robr}{r}
\newcommand{\robrj}{r_j}
\newcommand{\robw}{w}
\newcommand{\robrdot}{\dot{r}}
\newcommand{\roblyapmax}{\overline{V}(q, r)}
\newcommand{\roblyapdot}{\dot{V}(\qA(t))}

\newcommand{\robH}{H}
\newcommand{\robhmin}{\underline{h}(\qA(t),[\Delta])}
\newcommand{\robhdot}{\dot{h}(\qA(t))}
\newcommand{\roblevel}{V_M}
\newcommand{\robcoeff}{\gamma}
\newcommand{\robKinf}{\alpha}
\newcommand{\robgain}{\alpha_c}
\newcommand{\ultbound}{\sqrt{\frac{2 \roblevel}{\sigma_m}}}
\newcommand{\pbound}{\epsilon_p}
\newcommand{\vbound}{\epsilon_v}
\newcommand{\pboundj}{\epsilon_{p, j}}
\newcommand{\vboundj}{\epsilon_{v, j}}
\newcommand{\pboundvec}{E_p}
\newcommand{\vboundvec}{E_v}
\newcommand{\epvar}{x_{e_p}}
\newcommand{\evvar}{x_{e_v}}
\newcommand{\epvarj}{x_{e_{p, j}}}
\newcommand{\evvarj}{x_{e_{v, j}}}
\newcommand{\qA}{q_A}

\providecommand{\R}{\ensuremath \mathbb{R}}
\newcommand{\plan}{_p}
\newcommand{\prev}{\lbl{prev}}
\providecommand{\tfin}{t\lbl{f}}

\newcommand{\zi}{z_i}
\newcommand{\zj}{z_j}
\newcommand{\rbf}{\mathbf{r}(t)}

\newcommand{\bM}{M}
\newcommand{\Mq}{M(\q, \Delta_{arm})}
\newcommand{\Mqdot}{\dot{M}(q(t), \Delta)}
\newcommand{\bMt}{\Tilde{M}}

\newcommand{\bC}{C}
\newcommand{\Cq}{C(\q, \qd)}
\newcommand{\Cqd}{C(\q, \qd, \Delta_{arm})}
\newcommand{\bCt}{\Tilde{C}}

\newcommand{\bG}{G}
\newcommand{\Gq}{G(\q)}
\newcommand{\Gqd}{G(\q, \Delta_{arm})}
\newcommand{\bGt}{\Tilde{G}}

\newcommand{\bfc}{\mathbf{c}}
\newcommand{\bfI}{\mathbf{I}}

\newcommand{\bH}{H}
\newcommand{\Hq}{H(\q, \Delta)}
\newcommand{\btau}{\tau}
\newcommand{\tauqd}{\tau(\q, \qd, \qdd, \Delta)}
\newcommand{\tauqdcontact}{\tau_{\contact}(\q, \qd, \qdd, \Delta)}

\newcommand{\intparams}{[\Delta]}
\newcommand{\intparamsdb}{[\Delta]\lbl{db}}
\newcommand{\nomparams}{\Delta_0}
\newcommand{\trueparams}{\Delta}
\newcommand{\pzparams}{\pzgreek{\Delta}}

\makeatletter
\newcommand{\smalloplus}{\mathbin{\mathpalette\make@small\oplus}}
\newcommand{\smallotimes}{\mathbin{\mathpalette\make@small\otimes}}


\newcommand{\nominal}{\tau(t) = \bM(q(t), \nomparams) \ddot{q}_a(t) + \bC(q(t), \dot{q}(t), \nomparams) \dot{q}_a(t) + \bG(q(t), \nomparams)}

\newcommand{\robust}{ v = (\kappa(t) + \|\rho([\Phi]) \| + \varphi(t)) r}

\newcommand{\controller}{ u(t;k) = \tau(t;k) - \robv(t;k)}

\newcommand{\bPhi}{\Phi}

\newcommand{\wdistlong}{w(\Delta)}
\newcommand{\wdist}{w}
\newcommand{\wdistlongi}{w_i(\Delta)}

\newcommand{\wdistinterval}{w}


\newcommand{\closedloop}{\bH(\q,\Delta)\dot{r} + \bC(\q, \qd, \Delta)r  = -v + \wdist }

\newcommand{\lambdamin}{\lambda_m}
\newcommand{\lambdamax}{\lambda_M}
\newcommand{\sigm}{\sigma_{m}}
\newcommand{\sigM}{\sigma_{M}}

\newcommand{\Hquad}{\hspace{0.5em}}

\newcommand{\FK}{\regtext{\small{FK}}}
\newcommand{\IK}{\regtext{\small{IK}}}
\newcommand{\FO}{\regtext{\small{FO}}}
\newcommand{\IO}{\regtext{\small{IO}}}
\newcommand{\FS}{\regtext{\small{FS}}}
\newcommand{\IS}{\regtext{\small{IS}}}
\newcommand{\FC}{\regtext{\small{FC}}}
\newcommand{\IC}{\regtext{\small{IC}}}
\newcommand{\ID}{\regtext{\small{ID}}}

\newcommand{\exact}{^{\regtext{exact}}}
\newcommand{\slice}{\textnormal{\texttt{slice}}}
\newcommand{\eval}{\textnormal{\texttt{eval}}}
\newcommand{\stack}{\textnormal{\texttt{stack}}}
\newcommand{\reduce}{\textnormal{\texttt{reduce}}}
\newcommand{\getCoeffValue}{\texttt{getCoeffValue}}
\newcommand{\timeint}{([0, T])}

\newcommand{\Aobs}{A_O}
\newcommand{\bobs}{b_O}
\newcommand{\hobs}{h\lbl{obs}}
\newcommand{\nObs}{n_\mathscr{O}}
\newcommand{\obsset}{\mathscr{O}}

\newcommand{\SO}{\regtext{\small{SO}}}

\newcommand{\kj}{k_j}
\newcommand{\Kj}{K_j}

\newcommand{\kscale}{\eta_1}
\newcommand{\kjscale}{\eta_{j, 1}}
\newcommand{\koffset}{\eta_2}
\newcommand{\kjoffset}{\eta_{j, 2}}
\newcommand{\kvar}{x_k}
\newcommand{\kjvar}{x_{k_j}}

\newcommand{\tvar}{x_t}
\newcommand{\tvari}{x_{t_{i}}}

\newcommand{\F}{\mathcal{F}}

\newcommand{\eh}{\hat{e}}

\newcommand{\tsum}{{\textstyle\sum}}

\newcommand{\ujt}{u_j(t)}


\newcommand{\pjt}{p_j(t)}
\newcommand{\Rjt}{R_j(t)}










\newcommand{\unsafeobs}{\lbl{obs}}
\newcommand{\unsafeself}{\lbl{self}}
\newcommand{\unsafejoint}{\lbl{lim}}
\newcommand{\jlim}{\lbl{lim}}
\newcommand{\selfidx}{I\self}

\newcommand{\qlim}{q_{j,\regtext{lim}}}
\newcommand{\dqlim}{\dot{q}_{j,\regtext{lim}}}
\newcommand{\ddqlim}{\ddot{q}_{j,\regtext{lim}}}
\newcommand{\ulim}{u_{j,\regtext{lim}}}

\newcommand{\hitj}{h_i^{t, j}}
\newcommand{\buf}{\lbl{buf}}
\newcommand{\slc}{\lbl{slc}}
\newcommand{\Aitj}{A_i^{t, j}}
\newcommand{\bitj}{b_i^{t, j}}
\newcommand{\hitself}{h_{i_1, i_2}^{t}}
\newcommand{\Aitself}{A_{i_1, i_2}^{t}}
\newcommand{\bitself}{b_{i_1, i_2}^{t}}

\newcommand{\hqim}{h_{q_i^-}}
\newcommand{\hqip}{h_{q_i^+}}
\newcommand{\hdqim}{h_{\dot{q}_i^-}}
\newcommand{\hdqip}{h_{\dot{q}_i^+}}
\newcommand{\hijoint}{h_{i, \regtext{lim}}}











\newcommand{\initq}{q_{d_0}}
\newcommand{\initqj}{q_{d, j_{0}}}
\newcommand{\initdq}{\dot{q}_{d_0}}
\newcommand{\initdqj}{\dot{q}_{d, j_{0}}}
\newcommand{\initddq}{\ddot{q}_{d_0}}
\newcommand{\initddqj}{\ddot{q}_{d, j_{0}}}

\newcommand{\costfunc}{\phi}




\newcommand{\iss}{_{i}^{i}}
\newcommand{\issm}{_{i-1}^{i}}
\newcommand{\issmu}{_{i}^{i-1}}
\newcommand{\issmi}{_{i-1, i}^{i}}
\newcommand{\issp}{_{i+1}^{i}}
\newcommand{\issmm}{_{i-1}^{i-1}}
\newcommand{\isspp}{_{i+1}^{i+1}}
\newcommand{\issa}{_{a, i}^{i}}
\newcommand{\issc}{_{c, i}^{i}}
\newcommand{\issma}{_{a, i-1}^{i}}
\newcommand{\isspa}{_{a, i+1}^{i}}
\newcommand{\issmma}{_{a, i-1}^{i-1}}
\newcommand{\issppa}{_{a, i+1}^{i+1}}

\newcommand{\jss}{_{j}^{j}}
\newcommand{\jssm}{_{j-1}^{j}}
\newcommand{\jssmu}{_{j}^{j-1}}
\newcommand{\jssmj}{_{j-1, j}^{j}}
\newcommand{\jssp}{_{j+1}^{j}}
\newcommand{\jssmm}{_{j-1}^{j-1}}
\newcommand{\jsspp}{_{j+1}^{j+1}}
\newcommand{\jssa}{_{a, j}^{j}}
\newcommand{\jssCOM}{_{CoM, j}^{j}}
\newcommand{\jssc}{_{c, j}^{j}}
\newcommand{\jssma}{_{a, j-1}^{j}}
\newcommand{\jsspa}{_{a, j+1}^{j}}
\newcommand{\jssmma}{_{a, j-1}^{j-1}}
\newcommand{\jssppa}{_{a, j+1}^{j+1}}

\newcommand{\lssmu}{_{l}^{l-1}}

\newcommand{\qgoal}{q\lbl{goal}}
\newcommand{\qstart}{q\lbl{start}}

\newcommand{\MethodName}{WAITR}

\newcommand{\normrho}{||\rho([\Phi])||}
\newcommand{\normwmax}{||w_M||}
\newcommand{\wmax}{w_M}
\newcommand{\wmaxj}{w_{M, j}}

\newcommand{\timestep}{\Delta t}


\newcommand{\objrad}{r}
\newcommand{\ZMPpoint}{Z}
\newcommand{\heightobjCOM}{h_{CoM}}
\newcommand{\CoefFric}{\mu_{s}} 
\newcommand{\eenorm}{\hat{n}} 
\newcommand{\MGravInert}{\tilde{{\text{n}}}} 
\newcommand{\GenMGravInert}{\tilde{\text{n}}}
\newcommand{\FGravInert}{\tilde{{\text{f}}}} 
\newcommand{\GenFGravInert}{\tilde{\text{f}}}
\newcommand{\MGravInertProj}{\tilde{{\text{n}}}_{\COMproj}} 
\newcommand{\FGravInertProj}{\tilde{{\text{f}}}_{\COMproj}} 
\newcommand{\ZMPvec}{p_\text{ZMP}}
\newcommand{\COMproj}{p_{\text{CoM}}}
\newcommand{\objCOM}{G}

\newcommand{\numop}[1]{{\mathrm{\textnormal{\texttt{#1}}}}}

\newcommand{\elapsed}{\lbl{e}}
\providecommand{\tfin}{t\lbl{f}}

\newcommand{\homtrans}{S}


\newcommand{\COM}{CoM}
\newcommand{\COMvel}{\text{v}_{j,\COM}^{j}}
\newcommand{\COMaccel}{\dot{\text{v}}_{j,\COM}^{j}}
\newcommand{\angvel}{\omega\jss}
\newcommand{\angaccel}{\dot{\omega}\jss}

\newcommand{\qT}{q_D}
\newcommand{\pzkiqT}{\pz{\qT}(\pz{T_i};k,\intparams)}

\newcommand{\contact}{o}
\newcommand{\contactObj}{c}

\newcommand{\sep}{h_{sep}} 
\newcommand{\slip}{h_{slip}}
\newcommand{\tip}{h_{tip}}

\newcommand{\relativeMotion}{\text{relative motion}}
\newcommand{\RelativeMotion}{\text{Relative Motion}}
\newcommand{\itrelativeMotion}{relative motion}

\newcommand{\genwrench}[1]{\mathrm{w}_{#1}} 
\newcommand{\genwrenchPZ}[1]{\mathbf{w}_{#1}}
\newcommand{\genwrenchext}[1]{{\mathrm{F}}_{#1,ext}}
\newcommand{\genwrenchextPZ}[1]{\mathbf{\mathcal{F}}_{#1,ext}}
\newcommand{\cwrench}{\mathrm{w}_{\contact}}
\newcommand{\cwrenchObj}{\mathrm{w}_{\contactObj}}
\newcommand{\cwrenchPZ}{\mathbf{w}_{\contact}}
\newcommand{\gravinertwrench}[1]{\tilde{\mathrm{w}}_{#1}} 
\newcommand{\gravinertwrenchPZ}[1]{\tilde{\mathbf{w}}_{#1}(t)} 
\newcommand{\genjointforce}{\text{f}\jss}
\newcommand{\genjointforcenot}{\text{f}\jss}
\newcommand{\genjointforcenotPZ}{\text{\mathbf{f}}\jss}
\newcommand{\genjointtorque}{\text{n}\jss}
\newcommand{\genjointtorquenot}{\text{n}\jss}
\newcommand{\genCOMforce}{\text{F}\jss}
\newcommand{\genCOMforcePZ}{\mathbf{F}\jss}
\newcommand{\genCOMtorque}{\text{N}\jss}
\newcommand{\genCOMtorquePZ}{\mathbf{N}\jss}

\newcommand{\cforce}{\text{f}_{\contact}}
\newcommand{\cforcetan}{{\text{f}}_{\contact,T}}
\newcommand{\cforcenormal}{{\text{f}}_{\contact,N}}
\newcommand{\cxforce}{\text{f}_{\contact,x}}
\newcommand{\cyforce}{\text{f}_{\contact,y}}
\newcommand{\czforce}{\text{f}_{\contact,z}}

\newcommand{\cmoment}{{\text{n}}_{\contact}}
\newcommand{\cxmoment}{\text{n}_{\contact,x}}
\newcommand{\cymoment}{\text{n}_{\contact,y}}
\newcommand{\czmoment}{\text{n}_{\contact,z}}
\newcommand{\czmomentmax}{\text{n}_{\contact,z,max}}

\newcommand{\cforcenot}{{\text{f}}_{\contact}}
\newcommand{\cforcenotPZ}{\text{\mathbf{f}}_{\contact}}
\newcommand{\cforcetannot}{{\text{f}}_{\contact,T}}
\newcommand{\cforcetannotPZ}{\text{\mathbf{f}}_{\contact,T}}
\newcommand{\cforcenormalnot}{{\text{f}}_{\contact,N}}
\newcommand{\cforcenormalnotPZ}{\text{\mathbf{f}}_{\contact,N}}
\newcommand{\cxforcenot}{\text{f}_{\contact,x}}
\newcommand{\cxforcenotPZ}{\text{\mathbf{f}}_{\contact,x}}
\newcommand{\cyforcenot}{\text{f}_{\contact,y}}
\newcommand{\cyforcenotPZ}{\text{\mathbf{f}}_{\contact,y}}
\newcommand{\czforcenot}{\text{f}_{\contact,z}}
\newcommand{\czforcenotPZ}{\text{\mathbf{f}}_{\contact,z}}

\newcommand{\cmomentnot}{{\text{n}}_{\contact}}
\newcommand{\cmomentnotPZ}{\text{\mathbf{n}}_{\contact}}
\newcommand{\cxmomentnot}{\text{n}_{\contact,x}}
\newcommand{\cxmomentnotPZ}{\text{\mathbf{n}}_{\contact,x}}
\newcommand{\cymomentnot}{\text{n}_{\contact,y}}
\newcommand{\cymomentnotPZ}{\text{\mathbf{n}}_{\contact,y}}
\newcommand{\czmomentnot}{\text{n}_{\contact,z}}
\newcommand{\czmomentnotPZ}{\text{\mathbf{n}}_{\contact,z}}
\newcommand{\czmomentmaxnot}{\text{n}_{\contact,z,max}}

\newcommand{\TROtitle}{ARMOUR}

\maketitle

\begin{abstract}
    \label{sec:abstract}
    A key challenge to ensuring the rapid transition of robotic systems from the industrial sector to more ubiquitous applications is the development of algorithms that can guarantee safe operation while in close proximity to humans.
    Motion planning and control methods, for instance, must be able to certify safety while operating in real-time in arbitrary environments and in the presence of model uncertainty.
    This paper proposes Wrench Analysis for Inertial Transport using Reachability (\MethodName{}), a certifiably safe motion planning and control framework for serial link manipulators that manipulate unsecured objects in arbitrary environments.
    \MethodName{} uses reachability analysis to construct over-approximations of the contact wrench applied to unsecured objects, which captures uncertainty in the manipulator dynamics, the object dynamics, and contact parameters such as the coefficient of friction.
    An optimization problem formulation is presented that can be solved in real-time to generate provably-safe motions for manipulating the unsecured objects.
    This paper illustrates that \MethodName{} outperforms state of the art methods in a variety of simulation experiments and demonstrates its performance in the real-world.
\end{abstract}
\section{Introduction}
\label{sec:intro_old}


A key challenge of non-prehensile robotic manipulation is safe trajectory planning for manipulation of supported objects. 
The lack of force/form closure in non-prehensile manipulation tasks means that an incorrectly applied wrench can result in damage to the unsecured objects or the environment.
This challenge is compounded when the non-prehensile manipulation task must be performed in environments that may only be known at run-time as this requires real-time generation of motion plans.
Because the model of the object being manipulated may not be known perfectly, these methods must be able to account for model uncertainty as well.
Therefore, robotic manipulators need to be capable of finding and robustly executing provably safe trajectories in real-time.

\begin{figure}
    \centering
    \includegraphics[width=1\columnwidth]{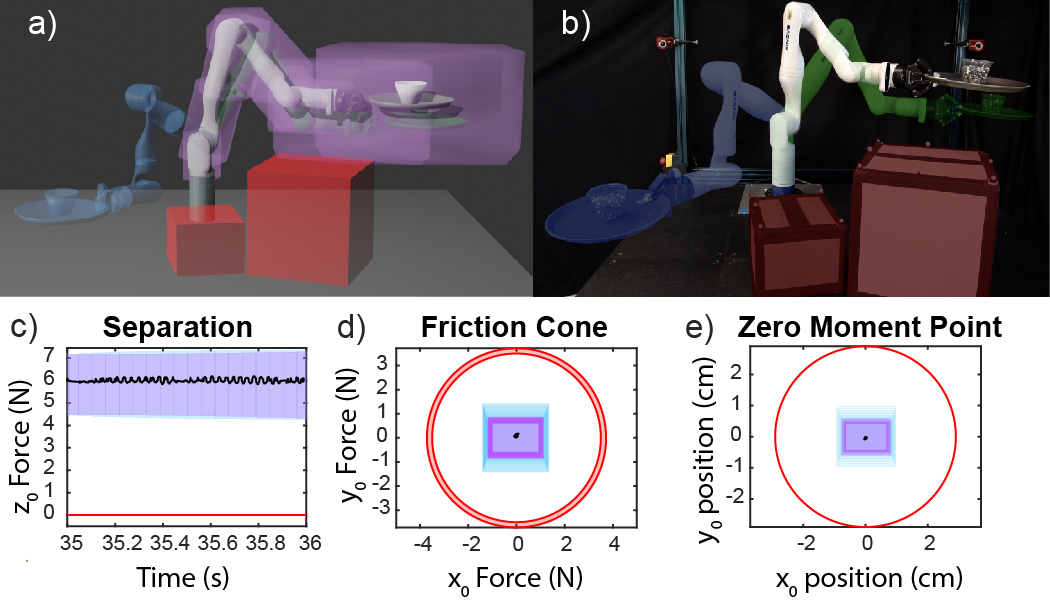}
    \caption{
    This paper considers the problem of safe motion planning for manipulation of unsecured objects with uncertain dynamics such as manipulating an unsecured cup filled with an uncertain mass around randomly placed obstacles (red) such that the cup does not move relative to the tray supporting it.
    \MethodName{} operates in receding-horizon fashion, moving from a start configuration (blue) to a global goal (green) by repeatedly generating new motion plans in real-time.
    In each motion planning iteration, \MethodName{} calculates a reachable set (blue and purple) for the contact wrench between the manipulator and the object as well as a Forward Reachable Set (FRS) for the whole manipulator system for a continuum of possible motion plans.
    The FRS is shown in purple in a) for a single planning iteration.
    \MethodName{} solves a constrained trajectory optimization problem to find a collision-free motion in this FRS that does not result in relative motion while making progress towards an intermediate waypoint (grey) and the global goal.
    Parts c)-e) show the contact constraints enforced during a hardware experiment for a single planning iteration.
    \vspace*{-0.5cm}
    }
    \label{fig:overview}
\end{figure}

To safely manipulate supported objects, the \relativeMotion{} between the objects and supporting surface needs to be constrained.
To generate such a motion, previous methods formulate optimal control problems which solve for a time-optimal trajectory along a pre-specified, discretized geometric path that is assumed to be collision-free \cite{Debrouwere2013,Csorvasi2017,Gattringer2022,Gattringer2022a}.
Relative motion is partially or fully constrained through a friction cone constraint \cite{Csorvasi2017}, as well as a lift and tipping constraint \cite{Debrouwere2013} and a rotational friction cone constraint \cite{Gattringer2022,Gattringer2022a}.
Note that relative motion is fully constrained only if all six degrees of freedom of the object are constrained.
Other methods also incorporate optimization of the geometric path into the time-optimal trajectory following task \cite{Flores,Ichnowski2021,Avigal2022,Gattringer2022}. 



Unfortunately, these methods are unable to generate a provably safe continuous-time trajectory in real-time while also accounting for modeling error.
In particular, during implementation, the optimization problems are only able to represent the collision avoidance constraint at discrete time-instances.
Finer discretizations result in trajectories that are more likely to be safe and dynamically feasible, but have higher computation times.
More troublingly, time-optimal trajectories commonly result in the robot operating near the constraint boundaries.
If not dealt with robustly, then uncertainty in the properties of the manipulator or object being manipulated could result in the \emph{executed} time-optimal trajectory being unsafe.
Uncertainty in both parameter estimation and execution error was considered in \cite{Luo2017}, however, the contact and dynamics constraints are still enforced at discrete time instances.

The contributions of this paper are two-fold.
First, we develop a framework that uses polynomial zonotopes to represent the reachability of wrenches exerted throughout a serial chain manipulator, including contact wrenches applied to manipulated objects.
Second, we formulate a trajectory optimization problem that can be tractably solved in real-time with continuous-time safety guarantees for preventing relative motion of unsecured objects.
As illustrated in Fig. \ref{fig:overview}, this framework is implemented in a receding horizon fashion and can robustly handle uncertainty in both the manipulator and object parameters.
Note this framework extends \TROtitle{} \cite{michaux2023}, which provides collision-free and dynamically feasible trajectories in real-time that account for tracking error due to modeling uncertainty in the manipulator, but is unable to make guarantees about the wrenches applied during motion. 
To make this distinction clear, \MethodName{} is tested in simulation and on hardware and compared against \TROtitle{}.

Next, we briefly summarize the structure of this paper.
Sec. \ref{sec:preliminaries} presents relevant notation and mathematical objects.
Sec. \ref{sec:modeling} describes the robot dynamics, the contact model and constraints, and presents a continuous-time optimization problem that ensures safe manipulation of unsecured objects. 
Sec. \ref{sec:formulation} summarizes a passivity-based robust controller \cite{michaux2023}, then describes the generation of polynomial zonotope overapproximations of the manipulator's trajectory and contact wrench and how that is used to form a tractable implementation of the continuous-time optimization problem which has continuous-time safety guarantees.
Section \ref{sec:experimental} details the simulation and hardware experiments. 
\vspace*{-0.15cm}
\section{Preliminaries}
\label{sec:preliminaries}

This section describes our notation conventions, several set representations, and operations on these set representation.
These operations are summarized for convenience in Tab. \ref{tab:poly_zono_operations}.
This paper uses a letter preceding an equation number, e.g. (A12), to refer to equations provided in supplementary appendices which can be found at \url{https://roahmlab.github.io/waitr-dev/}. 


The $n$-dimensional real numbers are $\R^n$, natural numbers $\N$, the unit circle is $\mathbb{S}^1$, and the set of $3 \times 3$ orthogonal matrices is $\SO(3)$.
Subscripts may index elements of a vector or a set.
Let $U$, $V \subset \R^n$.
For a point $u \in U$, $\{ u \} \subset U$ is the set containing only $u$.
The \emph{power set} of $U$ is $\pow{U}$.
The \emph{Minkowski Sum} is $U \oplus V$
the \emph{Minkowski Difference} is $U \ominus V = U \oplus (-V)$.
For vectors $a, b \in \R^3$, we write the cross product $a \times b$.
If $n = 3$, the \emph{set-based cross product} is defined as $U \otimes V = \{ u \times v \, \mid \, u \in U, v \in V \}$.
If $\{U_i \subset \R^n\}_{i=1}^{m}$ then let $\bigtimes_{i=1}^n U_i$ denote the Cartesian product of the $U_i$'s.
Let $X \subset \R^{n \times n}$ be a set of square matrices.
Then, \emph{set-based multiplication} is defined as $XV = \{ Av, \, \mid \, A \in X, v \in V \subseteq \R^n \}$.
Let $\zeros$ (resp. $\ones$) denote a matrix of zeros (resp. ones) of appropriate size, and let $\eye_n$ be the $n \times n$ identity matrix.


\begin{table}[t]
    \centering
    \begin{tabular}{c|c}
        Operation & Computation \\
        \hline
        $\iv{z} \to \pz{z}$ (Interval Conversion) (A8) & Exact \\ 
        $\pz{P}_1 \oplus \pz{P}_2$ (PZ Minkowski Sum) (A10) & Exact \\ 
        $\pz{P}_1\pz{P}_2$ (PZ Multiplication) (A12) & Exact \\ 
        $\pz{P}_1 \otimes \pz{P}_2$ (PZ Cross Product) (A13) & Exact \\ 
        $\setop{slice}(\pz{P}, \pzv_j, \sigma)$ (A14) & Exact \\ 
       $\setop{inf}(\pz{P})$ (A16) and $\setop{sup}(\pz{P})$ (A15) & Overapproximative \\ 
        $f(\pz{P}_1) \subseteq \pz{P}_2$ (Taylor expansion) A(23) & Overapproximative 
    \end{tabular}
    \caption{Summary of polynomial zonotope operations.
    \vspace*{-0.85cm}}
    \label{tab:poly_zono_operations}
\end{table}

We use intervals to describe uncertain manipulator parameters \cite{hickey2001interval}.
An \emph{n-dimensional interval} is a set $\interval{x} = \{ y \in \R^n \, \mid \, \lb{x}_i \leq y_i \leq \ub{x}_i, \Hquad \forall i = 1, \dots, n\}$.
When the bounds are important, we denote an interval $\interval{x}$ by $\interval{\lb{x}, \ub{x}}$, where $\lb{x}$ and $\ub{x}$ are the infimum and supremum, respectively, of $\interval{x}$.
Let $\mathbb{I}\mathbb{R}^{n}$ be the set of all real-valued $n$-dimensional interval vectors. 

Because multidimensional intervals can only describe hyperrectangles, they may be overly conservative when outer approximating other shapes.
To build a tighter outer-approximative representation of multidimensional sets,  we use zonotopes and polynomial zonotopes.
Here, we provide necessary definitions of zonotopes and polynomial zonotopes as well as a few important operations, with more operations summarized in Tab. \ref{tab:poly_zono_operations}. 
App. A \footnote[1]{Supplementary Appendices found at https://roahmlab.github.io/waitr-dev/} rigorously defines the operations given in Tab. \ref{tab:poly_zono_operations}.

To define zonotopes, we introduce an indeterminate vector $\pzv \in [-1, 1]^\pzn$ and exponents $\pzei \in \N^\pzn$.
Letting $\pze_1 = [1, 0, \ldots, 0]$, $\pze_2 = [0, 1, 0, \ldots, 0]$, \ldots, $\pze_\pzn = [0, 0, \ldots, 0, 1]$, we note that $\pzv^{\pze_1} = \pzv_1, \ldots ,\pzv^{\pze_\pzn} = \pzv_\pzn$ where the exponentiation is applied element-wise.
A \emph{zonotope} $Z \subset \R^n$  is a convex, centrally-symmetric polytope defined by a \emph{center} $g_0 \in \R^n$ and \emph{generators} $g_i \in \R^n$ as
    $Z = \{
            z \in \R^n \, \mid \,
            z = \sum_{i=0}^{\pzn} \pzgi \pzv ^{\pzei}, \, \pzv \in [-1, 1]^\pzn
        \},$
where there are $\pzn \in \N$ generators.

Note we have written $Z$ as the set of points produced by the polynomial $p(\pzv) = \sum_{i=0}^{\pzn} \pzgi \pzv ^{\pzei}$ over the domain $\pzv \in [-1, 1]^\pzn$.
A polynomial zonotope is the more general case where exponents $\pzei \in \N^\pzn$ can be arbitrary, which includes zonotopes as a subset.
When we need to emphasize the generators and exponents of a polynomial zonotope, we write $\pz{P} = \PZ{\pzgi, \pzei, \pzv}$.
Note we exclusively use bold symbols to denote polynomial zonotopes.

We use polynomial zonotopes $\pz{P}$ to represent a set of possible positions of a robot arm operating near an obstacle.
It may be beneficial to know whether a particular choice of $\pz{P}$'s indeterminates yields a subset of positions that could collide with the obstacle.
To this end, we introduce the operation of ``slicing'' a polynomial zonotope $\pz{P} = \PZ{ \pzgi, \pzei, \pzv }$  by evaluating an element of the indeterminate $\pzv$.
Given the $j$\ts{th} indeterminate $\pzv_j$ and a value $\sigma \in [-1, 1]$, 
let $\setop{slice}(\pz{P}, \pzv_j, \sigma)$ denote the slicing operation which  yields a subset of $\pz{P}$ by plugging $\sigma$ into the specified element $\pzv_j$ and is formally defined in (A14).
It is possible to efficiently generate upper and lower bounds on the values of a polynomial zonotope through overapproximation.
In particular, we define the $\setop{sup}$ and $\setop{inf}$ operations which return these upper and lower bounds, respectively in (A15) and (A16).
Note that for any $z \in \pz{P}$,  $\setop{sup}(\pz{P}) \geq z$ and $\setop{inf}(\pz{P}) \leq z$, where the inequalities are taken element-wise.



\section{Wrench and Online Optimization} 
\label{sec:modeling}
%
This section introduces an extended arm model for the manipulator-tray-object system and a contact model between the tray and object. 
To simplify exposition, this paper primarily focuses on ensuring that the manipulator operates without causing any relative motion between the serving tray and the objects on it.
Section \ref{sec:experimental} describes how to apply \TROtitle{} \cite{michaux2023} to ensure that the robot does not collide with obstacles while satisfying input and joint limit constraints.

\subsection{Manipulator Model}

Given an $\nq$-dimensional serial robotic manipulator with configuration space $Q$ and a compact time interval $T \subset \R$ we define a trajectory for the configuration as $q: T \to Q \subset \R^{\nq}$. 
The velocity of this configuration trajectory is $\dot{q}: T \to \R^{\nq}$.
We make the following assumptions about the robot model:
\begin{assum}
The robot operates in a three dimensional workspace, $W \subset \R^3$.
The robot is composed of revolute joints, where the $j$\ts{th} joint actuates the robot's $j$\ts{th} link.
 The robot has encoders that allow it to measure its joint positions and velocities.
The robot is fully actuated, where the robot's input $u: T \to \R^{\nq}$.
\end{assum}
\noindent We make the one-joint-per-link assumption with no loss of generality because joints with multiple degrees of freedom (\emph{e.g.}, spherical joints) may be represented using links with zero length.
This work can be extended to robots with prismatic joints by straightforward extensions to the RNEA algorithms presented in \cite{michaux2023}.
Finally, let $\Nq = \{1,\ldots,\nq\}$.

\subsubsection{Tray and Object Model}

\begin{figure}
    \centering
    \includegraphics[width=1\columnwidth]{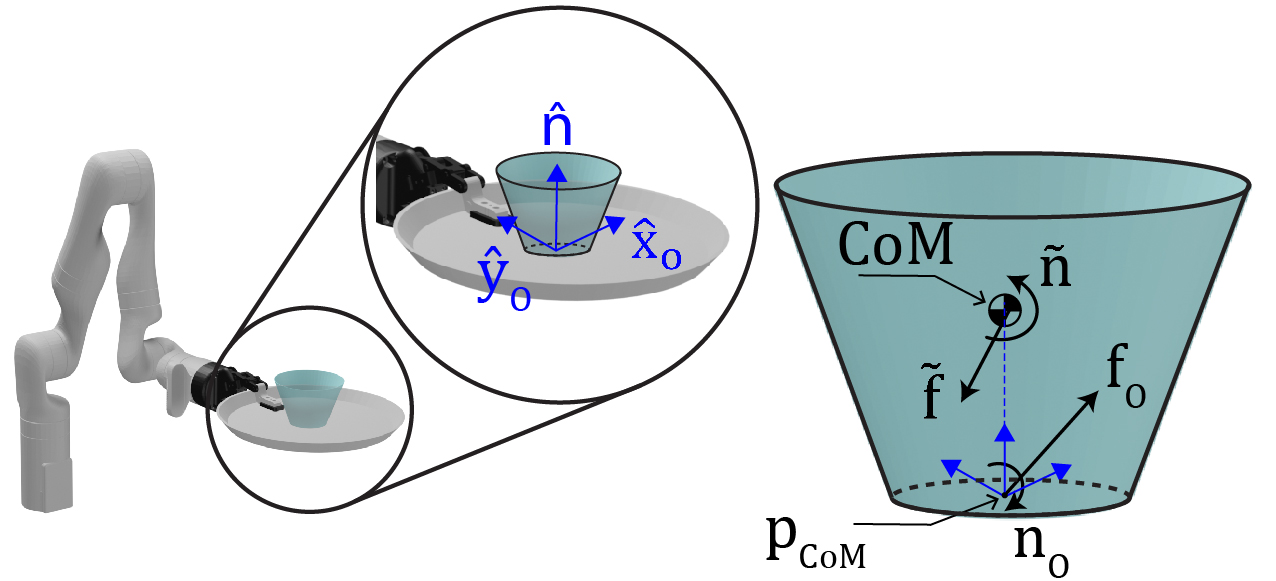}
    \caption{Free body diagram of a manipulated object. 
    $\COMproj{}$ is the projection of the COM onto the contact plane and $\eenorm$ is the contact normal vector and is chosen to be the z-axis of the contact frame.
    \vspace*{-0.5cm}}
    \label{fig:contactdiagram}
\end{figure}
    
    As illustrated in Fig. \ref{fig:contactdiagram}, the system considered in this paper is a robotic manipulator that grasps a flat surfaced tray with an object resting on it.
    The contact normal to the tray is denoted by the unit vector $\eenorm \in \R^3$.
    We make the following assumption:
    \begin{assum}
        \label{assum:contact}
        The manipulator maintains force closure on the tray for all time.
        The object has a circular contact area with radius $\objrad{}$.
        The friction between the tray and the object is described by linear Coulomb Friction where the coefficient of static friction is $\CoefFric$. 
    \end{assum}
    \noindent As a result, the tray can be treated as part of the link which maintains force closure on it, which we choose to be the $n_q^{\text{th}}$ link for convenience.
    The approach described in this paper can be generalized to ensure that a force closure condition was satisfied throughout the motion, but in the interest of simplicity, we leave that extension for future work.

\subsubsection{Kinematics and Dynamics}

Next, we introduce the kinematics and dynamics of the system.
Suppose there exists a fixed inertial reference frame, which we call the \defemph{world} frame, and a \defemph{base} frame, which we define as $0$\ts{th} frame, that indicates the origin of the robot's kinematic chain.
The $j$\ts{th} reference frame $\{\hat{x}_j, \hat{y}_j, \hat{z}_j\}$ is attached to the robot's $j$\ts{th} revolute joint, and $\hat{z}_j = [0, 0, 1]\trans$ corresponds to the $j$\ts{th} joint's axis of rotation for $j \in \Nq$.
For a configuration at a particular time, $\q$, the position and orientation of frame $j$ with respect to frame $j-1$ can be expressed using homogeneous transformations \cite[Ch. 2]{spong2005textbook} 
with a a configuration-dependent rotation matrix, $R\jssmu (\qj)$, and  a fixed translation vector from frame $j-1$ to frame $j$, $p\jssmu$.

The robot is composed of $\nq$ rigid links with inertial parameters given by the vector $ \Delta_{arm} = (m_1, \ldots, m_{\nq}, c_{x,1}, \ldots, c_{z,{\nq}}, I_{xx,1}, \ldots, I_{zz,{\nq}})\trans$, 
where $m_j$, $c_j =(c_{x,j}, c_{y,j}, c_{z,j})\trans$, and $(I_{xx,j}, \ldots, I_{zz,j})\trans$  represent the mass, center of mass (CoM), and inertia tensor of the $j$\ts{th} link, respectively. 
The properties for $\nq^{th}$ link are comprised of the gripping link and tray properties.
The object is a rigid body with inertial parameters given by the vector $ \Delta_{o} = (m_o, c_{x,o}, \ldots, c_{z,o}, I_{xx,o}, \ldots, I_{zz,o})\trans$,
where $m_o$, $c_o =(c_{x,o}, c_{y,o}, c_{z,o})\trans$, and $(I_{xx,o}, \ldots, I_{zz,o})\trans$  represent the mass, center of mass, and inertia tensor of the object, respectively.

The arm dynamics are represented by the standard manipulator equations \cite[(10.13)]{spong2005textbook}
\begin{equation}\label{eq:manipulator_equation}
    \begin{split}
    \hspace*{-0.25cm}
    \bM(\q, \Delta_{arm}) \qdd +
    \bC(\q, \qd, \Delta_{arm}) \qd +
    \bG(\q, \Delta_{arm}) \\
    =  u(t) + J\trans(\q) \cwrenchObj(\q)
    \end{split}
\end{equation}
where $\Mq \in \R^{\nq \times \nq}$ is the positive definite inertia matrix, $\Cqd \in \R^{\nq \times \nq}$ is the Coriolis matrix, $\Gqd \in \R^{\nq}$ is the gravity vector, $u(t) \in \R^{\nq}$ is the input torque, $J(\q) \in \R^{6 \times \nq}$ is the manipulator Jacobian, and $\cwrenchObj(\q) \in \R^{6}$ is the wrench applied by the object onto the manipulator, all at time $t$. 

Let $\cwrench = -\cwrenchObj$ be the wrench that the manipulator applies to the object.
We refer to $\cwrench$ as the contact wrench and define it as $\cwrench(\q) = 
\begin{bmatrix} 
\cforce(\q)\trans & \cmoment(\q)\trans \end{bmatrix}\trans$
where $\cforce(\q \in \R^3$ is the contact force and $\cmoment(\q) \in \R^3$ is the contact moment
 \cite[(5.99)]{Craig2005}.
If there are no other external wrenches applied to the object, then the object at time $t$ is manipulated solely by the contact wrench. 
The object dynamics can be represented using standard Newton-Euler equations for a rigid body.

While following a trajectory, if the object moves with respect to the tray during operation, we say that the object is experiencing \emph{$\relativeMotion$}.
{\bf This paper's goal is to devise a planning strategy to avoid $\relativeMotion$} when beginning from some initial condition while trying to reach a user-specified goal configuration or end effector position. 
Importantly, when there is no $\relativeMotion$, the object can be treated as a rigid body attached to the tray.
However, instead of treating it as part of the last link similar to the tray, we instead define a \emph{fixed joint} between the object and tray.
We define the location of the fixed joint to be at $\COMproj{}$, as shown in Fig. \ref{fig:contactdiagram}.
The reference frame for the fixed joint has the z-axis $\hat{z}_o$ chosen to be aligned with the contact normal $\eenorm$.
We choose the y-axis $\hat{y}_o$ to be oriented towards the plane of the supporting surface  aligned with joint $\nq$ and  
Throughout the remainder of the paper, we refer to this manipulator, tray, and object system as the \emph{extended arm}.
We also let $\Ne = \{1,\ldots,\nq+1\}$.

There are several benefits to this modeling choice.
The first is that the dynamics of the entire system, including the object, can be calculated using the forward pass of a modified Recursive Newton-Euler Algorithm (RNEA) which is described in App. B\footnote{Supplementary Appendices found at https://roahmlab.github.io/waitr-dev/}.
The second is that the backward pass of RNEA calculates the wrench exerted through the fixed joint by the tray on the object, which due to the choice of joint location, corresponds to the contact wrench applied to the object.

For the extended arm, the object inertial parameters in $\Delta_{o}$ are inserted into the $n_{q+1}^{th}$ positions in the manipulator inertial parameter vector $\Delta_{arm}$ to form the extended arm inertial parameter vector, $\Delta$.
We make the following assumption about the extended arm's inertial parameters:
\begin{assum}
\label{assum:arm_model}
The model structure (i.e., number of joints, sizes of links, etc.) of the robot is known, but its true inertial parameters $\trueparams$ are unknown. 
The uncertainty in each inertial parameter is known and given by the interval vector, $\intparams = \left(\iv{m_1}, \ldots, \iv{m_o}, \iv{c_{x,1}}, \ldots, \iv{c_{z,o}}, \iv{I_{xx,1}}, \ldots \iv{I_{zz,o}}\right)\trans$.
The true parameters lie in this interval, i.e., $\trueparams \in \intparams$.
All elements of this interval vector have bounded elements and any inertia tensor drawn from $[I_j]$ must be positive semidefinite.
\end{assum}
\noindent Before continuing, we make one final observation.
{\bf Note that the state of the robot depends on the inertia parameter vector.} 
We leave out this dependence for convenience unless we want to emphasize the importance of this dependence in which case we write $q(t;\Delta)$. 


\subsection{Trajectory Parameterization and Online Control}
\label{subsec:traj_parameterization}

\MethodName{} performs planning in a receding horizon fashion.
We assume without loss of generality that the control input and trajectory of a planning iteration begin at time $t=0$ and end at a fixed time $\tfin$.
To ensure persistent real-time operation, we require that \MethodName{} identifies a new trajectory parameter within a fixed planning time of $t\plan$ seconds, where $t\plan < \tfin$.
Thus \MethodName{} has limited planning time and must select a new trajectory parameter before completing its tracking of the previously identified desired trajectory.

In each planning iteration, \MethodName{} chooses a desired trajectory to be followed by the arm. 
These trajectories are chosen from a continuum of trajectories, with each uniquely determined by a \textit{trajectory parameter} $k \in K$ and are written as $q_d(t;k)$ or $q_d(t;k,\Delta)$ when we want to emphasize the dependence on the inertia parameter vector. 
The set $K \subset \R^{n_k}$, $n_k \in \N$, is compact and represents a user-designed continuum of trajectories. 
In general, $K$ can be designed to include trajectories designed for a wide variety of tasks and robot morphologies \cite{holmes2020armtd, kousik2020bridging, kousik2019_quad_RTD}.
Finally, we assume that $\ddot{q}_d( \cdot ;k)$ is a Lipschitz continuously differentiable function and at $\dot{q}_d( \tfin; k) = \ddot{q}_d( \tfin;k) = 0$.

A user can track a particular desired trajectory by applying some feedback controller.
As a result we write $q( \cdot ;k)$ for a trajectory of the system at time $k$.
Note \MethodName{} uses a specific type of feedback controller that is described in Sec. \ref{sec:controller}.
Because the feedback controller can be a function of the state and the desired trajectory and its derivatives, we define a \defemph{total feedback trajectory}, $\qA$, as:
\begin{equation} \label{eq:total_feedback_traj}
\qA(t;k) =
    \begin{bmatrix}
        q(t;k) & \dot{q}(t;k) &  q_d(t;k) & \dot{q}_d(t;k) & \ddot{q}_d(t;k)
    \end{bmatrix}.
\end{equation}

\subsection{Contact Constraints}
\label{subsec:contact_constraint}

To ensure that the object does not experience $\relativeMotion$, we need to compute the wrench applied on the object during motion.
This contact wrench is applied by the manipulator through the supporting surface, and an equivalent contact wrench $\cwrench$ can be written with respect to point $\COMproj{}$.
The contact wrench is a function of the robot's configuration trajectory and its velocity and acceleration.
\noindent Because there are no external wrenches applied to the object, the object at time $t$ is manipulated solely by the following contact wrench $\cwrench$. 
Then, the motion of the object can be constrained in all six degrees of freedom by using components of the contact wrench $\cwrench$.
Using the following lemma, whose proof can be found in App. C\footnotemark[1], one can ensure no $\relativeMotion$ occurs:
\begin{lem}
    \label{lem:constraint_satisfaction}
    Let the \emph{vertical separation} be defined as:
    \begin{equation}
    \label{eq:vsep}
        \sep(\cwrench(\qA(t;k))) := -\czforce(\qA(t;k))
    \end{equation}
    Let the \emph{linear slip} be defined as:
    \begin{equation}
    \label{eq:linslip}
    \begin{split}
         \slip(\cwrench(\qA(t;k)))& :=   (\cxforce(\qA(t;k)))^2 + \\ &+ (\cyforce(\qA(t;k)))^2 - (\CoefFric \czforce(\qA(t;k)))^2
    \end{split}
    \end{equation}
    Let the \emph{tip} of the object be defined as:
    \begin{equation}
        \label{eq:tipConstraint}
        \begin{split}
        \tip(\cwrench(\qA(t;k))) := (\eenorm{} & \times \cmoment(\qA(t;k)))^2 + \\ &- \objrad^2 (\eenorm{} \cdot \cforce(\qA(t;k)))^2
        \end{split}
    \end{equation}
    If  \eqref{eq:vsep},\eqref{eq:linslip}, and \eqref{eq:tipConstraint} are less than or equal to zero
      for all $t \in T$, then the object does not experience $\relativeMotion$ while the robot is moving along $q$.
\end{lem}

\subsection{Online Trajectory Optimization}

Then by using this parameterization and the constraints described in Sec. \ref{subsec:contact_constraint}, one could compute a trajectory to manipulate an unsecured object with zero relative motion by solving the following problem:
\begin{align}
    \label{eq:optcost}
    &\underset{k \in K}{\min} &&\texttt{cost}(k) \\
    \label{eq:optsepcon}
    &&&  \sep(\cwrench(\qA(t;k,\Delta))) \leq 0 &&\forall t \in T,  \Delta \in [\Delta] \\
    \label{eq:optslipcon}
    &&& \slip(\cwrench(\qA(t;k,\Delta))) \leq 0 &&\forall t \in T,  \Delta \in [\Delta]  \\
    \label{eq:opttipcon}
    &&& \tip(\cwrench(\qA(t;k,\Delta))) \le 0 &&\forall t \in T,  \Delta \in [\Delta].
\end{align}
Note that the total trajectory in this instance is a function of the parameterized trajectory and the true inertia parameters of the extended arm.
Because these parameters may not be known, we require that the constraints be satisfied for all possible inertia parameters.
Implementing a real-time optimization algorithm to solve this problem is challenging for several reasons.
First, the dynamics of the robot are nonlinear and constructing an explicit solution to them is intractable. 
Second, the constraints must be satisfied for all time $t$ in an uncountable set $T$.
\vspace*{-0.5cm}
\section{Planning Algorithm Formulation}
\label{sec:formulation}

The key technical idea of this work is forming polynomial zonotope overapproximations of the desired trajectories and the contact wrench.
This enables us to generate polynomial zonotope overapproximations of all constraints in the optimization problem \eqref{eq:optcost}-\eqref{eq:opttipcon} at the start of the planning iteration, then use these constraint overapproximations to perform optimization within $t_p$.
Then, we can use the robust passivity-based controller proposed in \cite[Sec. VII]{michaux2023} to bound the tracking error of the extended arm.
This section begins by summarizing the important properties of the aforementioned robust passivity-based controller. 
Subsequently, it describes how to represent the aforementioned constraints conservatively using polynomial zonotopes. 
This section concludes by describing how to transform the optimization problem \eqref{eq:optcost}-\eqref{eq:opttipcon} into an implementable version whose solutions can be be followed by the robot without experiencing $\relativeMotion$ while satisfying input constraints. 

\vspace*{-0.15cm}
\subsection{Robust Passivity-Based Controller}
\label{sec:controller}


We begin by restating Cor. 12 from \cite{michaux2023}:
\begin{lem}
\label{lem:tracking_bounds}
Suppose there exists a $\sigma_m > 0$ such that $0 < \sigm \leq \lambdamin(\bM(q(t),\trueparams))$  for all $q(t) \in Q$ and $\trueparams \in \intparams$.
Let $\err(t;k) := q_d(t;k) - q(t;k)$.
Let $e(0) = \dot{e}(0) = 0$, and let $\roblevel > 0$ and $K_m \geq 0$ be user-specified constants.
Define
\begin{equation}
\label{eq:ultimate_bound}
\pbound \coloneqq \frac{1}{K_{r}} \ultbound \quad  \regtext{and} \quad \vbound \coloneqq 2\ultbound. 
\end{equation}
Then there exists a feedback controller 
that when applied to the extended arm dynamics \eqref{eq:manipulator_equation} ensure that $|\errj(t)| \leq \pbound$ and $|\errjdot(t)| \leq \vbound$ for all $t \in T$ and $j \in \Ne$.
\end{lem}
\noindent We make two important remarks about the feedback controller. 
First, 
 we compute $\sigm$ numerically and verify that it is greater than zero during experiments.
Second, note that one can compute the feedback controller by applying a variant of the Recursive Newton Euler Algorithm (RNEA) \cite[Sec. VII-B]{michaux2023}. 
To bound the error for all time rather than just over $t \in T$ one can apply the previous lemma and \cite[Rem. 13]{michaux2023} which we summarize here for convenience:
\begin{rem}
\label{rem:combining_traj}
Suppose that the initial condition of the desired trajectory in the first planning iteration matches the actual state of the robot, and that the initial condition of all subsequent desired trajectories match the state of the previous desired trajectory at $t = t\plan$.
Then, one can satisfy the bounds described by Lem. \ref{lem:tracking_bounds}.
\end{rem}
\noindent Note that to apply Rem. \ref{rem:combining_traj}, one only needs to know the final state of the previous \defemph{desired trajectory} rather than the final state of the robot's \defemph{actual trajectory}.

\subsection{Polynomial Zonotope Overapproximation}

This subsection describes how \MethodName{} uses Lem. \ref{lem:tracking_bounds} to overapproximate parameterized desired trajectories to conservatively account for continuous time operation and tracking error within our optimization framework.
The desired trajectories $\qdesk$ are functions of only time $t$ and the trajectory parameter $k$.
As illustrated in Fig. \ref{fig:PZoverview}, our approach creates polynomial zonotope versions of $T$ and $K$, which are then plugged into the formula for $\qdesk$ to create polynomial zonotope overapproximations which are then extended to the joint trajectory, the force trajectory, and subsequently the tipping constraint for the manipulation problem.

\begin{figure}
    \centering
    \includegraphics[width=1.0\columnwidth]{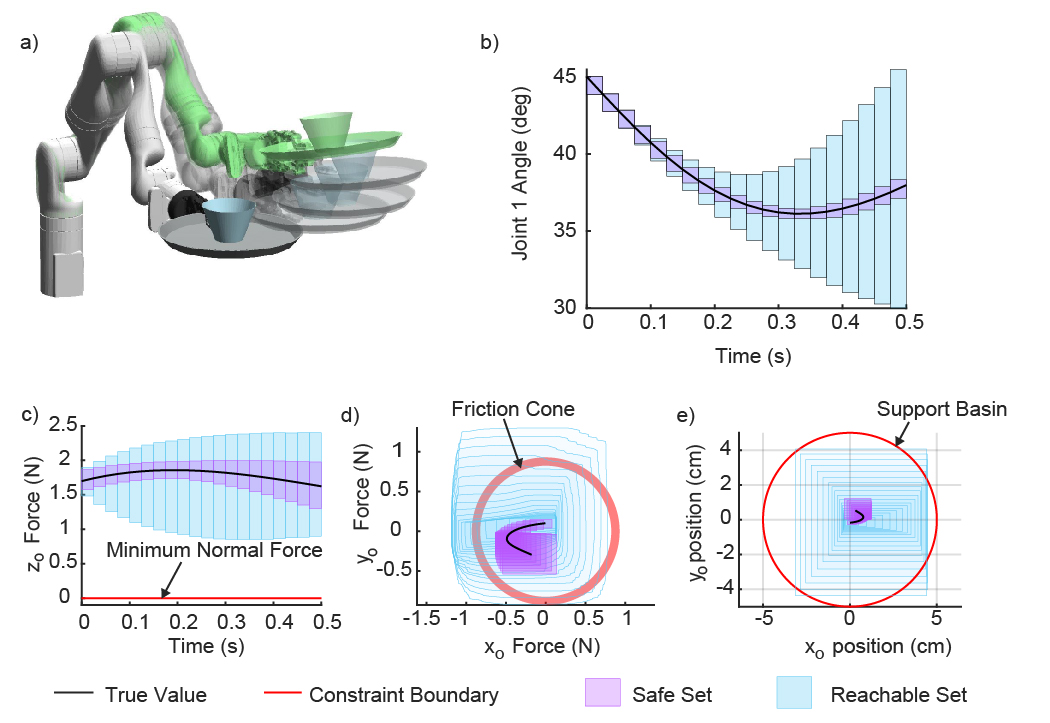}
    \caption{An illustration of the polynomial zonotope overapproximations of the configuration, force, and constraint trajectories.
    The motion of the robot is shown in intermediate configurations the top of the figure. 
    Blue indicates unsliced reachable sets, purple indicates sliced reachable sets and black lines are the nominal values.
    Red indicates a constraint boundary.
    \vspace*{-0.5cm}
    }
    \label{fig:PZoverview}
\end{figure}

\subsubsection{Time Horizon and Trajectory Parameter PZs}
We first create polynomial zonotopes representing the time horizon $T$.
Choose a timestep $\timestep$ so that $\nt \coloneqq \frac{T}{\timestep} \in \N$.
Let $N_t := \{1,\ldots,\nt\}$.
Divide the compact time horizon $T \subset \R$ into $\nt$ time subintervals.
We represent the $i$\ts{th} time subinterval corresponding to $t \in \iv{(i-1)\timestep, i\timestep}$ as a polynomial zonotope $\pz{T_i} = \left\{t \mid 
            t = \tfrac{(i-1) + i}{2}\timestep + \tfrac{1}{2} \timestep \tvari,\ \tvari \in [-1,1]
        \right\}$.
with indeterminate $\tvari \in \iv{-1, 1}$.

In this work, we choose $K = \bigtimes_{j=1}^{n_q} K_j$, where each $\Kj$ is a compact one-dimensional interval.
For simplicity, let each $\Kj = \iv{-1, 1}$.
We represent the interval $\Kj$ as a polynomial zonotope $\pz{\Kj} = \kjvar$ where $\kjvar \in \iv{-1, 1}$ is an indeterminate.
Let $\kvar \in \iv{-1,1}^{n_q}$ denote the vector of indeterminates where the $j$\ts{th} component of $\kvar$ is $\kjvar$.
With this choice of $\Kj$, any particular $k_j$ directly yields $k_j = \setop{slice}(\pz{\Kj}, \kjvar, k_j)$ (see (A14)\footnote{Supplementary Appendices found at https://roahmlab.github.io/waitr-dev/}).

\subsubsection{Trajectory PZs}
\label{sec:traj_pz}

Recall that $\pz{T_i}$ and $\pz{K_j}$ have indeterminates $\tvari$ and $\kjvar$, respectively.
Because the desired trajectories only depend on $t$ and $k$, the polynomial zonotopes $\pzqdesji$, $\pzqddesji$ and $\pzqdddesji$ depend only on the indeterminates $\tvari$ and $\kvar$.
By plugging in a given $k$ for $\kvar$ via the $\setop{slice}$ operation, we obtain a polynomial zonotope where $\tvari$ is the only remaining indeterminate. 
Because we perform this particular slicing operation repeatedly throughout this document, if we are given a polynomial zonotope, $\pzqdesji$, we use the shorthand $\pzqdesjki = \setop{slice}(\pzqdesji, \kvar, k)$.
Because of our previous observation, $\qdeskj \in \pzqdesjki$ for all $t \in \pz{T_i}$.

Next, we use Lem. \ref{lem:tracking_bounds} to generate polynomial zonotopes that overapproximate any trajectory that is followed by the robot.
\begin{lem}
\label{lem:pz_q_bound}
Let $\pzpboundj = \pboundj\epvarj$ and $\pzvboundj = \vbound\evvarj$, with indeterminates $\epvarj \in \iv{-1, 1}$ and $\evvarj \in \iv{-1, 1}$.
Then, let 
\begin{align}
    \pzqji &:= \pzqdesji \oplus \pzpboundj \\
    \pzqdji &:= \pzqddesji \oplus \pzvboundj 
\end{align}
for each $i \in \Nt$.
The polynomial zonotopes $\pzqji$ and $\pzqdji$ overapproximate the set of all joint trajectories that can be executed by the robot, i.e., for each $k \in \pz{K}$ and $j \in \Ne$, $q_j(t; k) \in \pzqjki \text{   and   } \dot{q}_j(t; k) \in \pzqdjki, \forall t \in \pz{T_i}$.
\end{lem}
\noindent Moving forward, let $\pzqdesi = \bigtimes_{j=1}^{n_q} \pzi{q_{d, j}}$ and $\pzqi = \bigtimes_{j=1}^{n_q} \pzi{q_{j}}$
We similarly define $\pzqddesi$, $\pzqdi$, $\pzqdddesi$, and $\pzqddi$.
Furthermore, let $\pz{\pboundvec} = \bigtimes_{j=1}^{n_q}\bm{\epsilon}_\mathbf{p, j}$  and $\pz{\vboundvec} = \bigtimes_{j=1}^{n_q}\bm{\epsilon}_\mathbf{v, j}$.

After generating the polynomial zonotope overapproximations of joint trajectories in the previous subsection, \MethodName{} composes these together to compute a polynomial zonotope overapproximation of the contact wrench by applying Alg. \ref{alg:PZRNEA}.
This algorithm is based on RNEA, which is a tool to compute the dynamics of serial chain manipulators.
Alg. \ref{alg:PZRNEA} is a polynomial zonotope version of the RNEA, and was originally proposed in \cite{michaux2023}.
Importantly, because of the extended arm modeling choice described in Sec. \ref{sec:modeling}, this same algorithm now computes an overapproximation of the contact wrench.
Using \cite[Lem. 7]{michaux2023} and the fact that the vertical separation, linear slip, and tip constraints are all polynomial functions, one can prove the following result:
\vspace*{-0.1cm}
\begin{lem}\label{lem:PZSlipConstraints}
Let $\{\pz{w\jss}(\pzqAi, \intparams)\}_{j \in \Ne} =$ $~\texttt{PZRNEA}(\pzqAi, \intparams)$ for each $i \in \Nt$.
Then the vertical separation, linear slip, and tip constraints can be overapproximated by polynomial zonotopes using $\pz{w^{n_q+1}_{n_q+1}}(\pzqAi, \intparams)$, i.e., for each $k \in \pz{K}$
\begin{align}
    \sep(\cwrench(\qA(t;k,\Delta))) &\in \pz{\sep}(\pz{w^{n_q+1}_{n_q+1}}(\pzqAki, \intparams)) \\
    \slip(\cwrench(\qA(t;k,\Delta))) &\in \pz{\slip}(\pz{w^{n_q+1}_{n_q+1}}(\pzqAki, \intparams)) \\
    \tip(\cwrench(\qA(t;k,\Delta))) &\in \pz{\tip}(\pz{w^{n_q+1}_{n_q+1}}(\pzqAki, \intparams)),
\end{align}
$\forall t \in \pz{T_i}$ and $\Delta \in \intparams$.
\end{lem}



\begin{algorithm}[t]
\small
\begin{algorithmic}[1]
\State Let $\pzqdai := \pzqdesi \oplus K_r \pz{\pboundvec}$.
\State Let $\pzqddai := \pzqddesi \oplus K_r \pz{\vboundvec}$.
\State Let $a_0^0 =  \begin{bmatrix} 0 & 0 & 9.81 \end{bmatrix}^\top$.
\State{\bf parfor} $i \in \Nt$ 

\State\hspace{0.2in}{\bf for} $j = 1:\nq+1$ 
    
    \State\hspace{0.4in}$\pz{R\jssmu}, \pz{\nom{p\jssmu}} \leftarrow \pz{\homtrans_{j}^{j-1}}(\pzqji)$ 
    
    \State\hspace{0.4in} $\pz{R\jssm} \gets \texttt{pzTranspose}(\pz{R\jssmu})$
    
\State\hspace{0.2in}{\bf end for}

\State\hspace{0.2in}{\bf for} $j = 1:\nq+1$ 

    \State\hspace{0.4in} $\pz{\omega\jss} \gets \pz{R\jssm} \pz{\omega\jssmm} \bigoplus \pz{\qdjnot}\zj$
    
    \State\hspace{0.4in} $\pz{\omega\jssa} \gets \pz{R\jssm} \pz{\omega\jssmma} \bigoplus \pz{\qajdotnot}\zj$
    
    \State\hspace{0.4in} $\pz{\dot{\omega}\jss} \gets \pz{R\jssm} \pz{\dot{\omega}\jssmm} \bigoplus ((\pz{R\jssm} \pz{\omega\jssa}) \bigotimes (\pz{\qdjnot} \zj)) \bigoplus \pz{\qajddotnot} \zj$ 
    
    
    \State\hspace{0.4in} $\pz{a\jss} \gets (\pz{R\jssm} \pz{a\jssmm}) \bigoplus (\pz{\dot{\omega}\jss} \bigotimes \pz{p\jssmu}) \bigoplus (\pz{\omega\jss} \bigotimes (\pz{\omega\jssa} \bigotimes \pz{p\jssmu}))$

    \State\hspace{0.4in} $\pz{a\jssCOM} \gets \pz{a\jss} \bigoplus (\pz{\dot{\omega}\jss} \bigotimes \pz{p\jssCOM}) \bigoplus (\pz{\omega\jss} \bigotimes (\pz{\omega\jssa} \bigotimes \pz{p\jssCOM})) $
    
    \State\hspace{0.4in} $\pz{F\jss} \gets \pz{m_i} \pz{a\jssCOM}$
    
    \State\hspace{0.4in} $\pz{N\jss} \gets \pz{I\jss} \pz{\dot{\omega}\jss} \bigoplus (\pz{\omega\jssa}  \bigotimes (\pz{I\jss} \pz{\omega\jss})) $
    
\State\hspace{0.2in}{\bf end for}

\State\hspace{0.2in}Initialize $\pz{f^{\nq+2}_{\nq+2}}, \pz{n^{\nq+2}_{\nq+2}}, \pz{R^{\nq}_{\nq+2}}$

\State\hspace{0.2in}{\bf for} $j = \nq+1:-1:1$ 

\State\hspace{0.4in} $\pz{f\jss}(\pzqAi, \intparams) \gets (\pz{R\jssp} \pz{f^{j+1}_{j+1}}) \bigoplus \pz{F\jss} $

\State\hspace{0.4in} $ \pz{n\jss}(\pzqAi, \intparams) \gets (\pz{R\jssp} \pz{n^{j+1}_{j+1}}) \bigoplus (\pz{p\jssCOM} \bigotimes \pz{F\jss}) \newline \hspace*{7em} \bigoplus (\pz{p\jssp} \bigotimes (\pz{R\jssp} \pz{f^{j+1}_{j+1}})) \bigoplus \pz{N\jss} $ 

\State\hspace{0.4in} $ \pz{w\jss}(\pzqAi,\intparams) \gets \begin{bmatrix}\pz{f\jss}(\pzqAi, \intparams) \\ \pz{n\jss}(\pzqAi, \intparams)  \end{bmatrix} $ 





\State\hspace{0.2in}{\bf end for}

\State{\bf end parfor} 
\end{algorithmic}
\caption{\small  $\{\pz{w\jss}(\pzqAi, \intparams)\}_{j \in \Ne} = ~\texttt{PZRNEA}(\pzqAi, \intparams)$}
\label{alg:PZRNEA}
\end{algorithm}

\vspace*{-0.45cm}
\subsection{Implementable Online Optimization Problem}
\label{sec:PZOptProb}
\vspace*{-0.10cm}

Using the polynomial zonotope overapproximations of the vertical separation, linear slip, and tip constraints, one can construct an representation of \eqref{eq:optcost}-\eqref{eq:opttipcon}:
\vspace*{-0.15cm}
\begin{align}
    \label{eq:pz_optcost}
    &\underset{k \in \pz{K}}{\min} &&\texttt{cost}(k) \\
    \label{eq:pz_optsepcon}
    &&& \hspace*{-0.25cm} \setop{sup}(\pz{\sep}(\pz{w^{n_q+1}_{n_q+1}}(\pzqAki, \intparams)) \leq 0 \quad &\forall i \in \N_t \\
    \label{eq:pz_optslipcon}
    &&&  \hspace*{-0.25cm} \setop{sup}(\pz{\slip}(\pz{w^{n_q+1}_{n_q+1}}(\pzqAki, \intparams)) \leq 0 \quad & \forall i \in \N_t \\
    \label{eq:pz_opttipcon}
    &&&  \hspace*{-0.25cm} \setop{sup}(\pz{\tip}(\pz{w^{n_q+1}_{n_q+1}}(\pzqAki, \intparams)) \leq 0 \quad & \forall i \in \N_t.
\end{align}
Using Lem. \ref{lem:PZSlipConstraints} one can prove the following theorem about this optimization problem.
\begin{thm}
\label{thm:optProbSafe}
    Any feasible solution $k$ to the optimization problem described in \eqref{eq:pz_optcost}-\eqref{eq:pz_opttipcon} parameterizes a trajectory that results in no $\relativeMotion$ between an unsecured object and the end-effector of the robot over the time horizon $T$.
\end{thm}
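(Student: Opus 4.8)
The plan is to show that feasibility of the discretized program forces the \emph{continuous-time} constraints \eqref{eq:optpos}--\eqref{eq:opttipcon} to hold at every time in the planning horizon, and then to invoke Theorem~\ref{thm:constraint_satisfaction} for the no-relative-motion conclusion. The whole argument is a chaining of the overapproximation guarantees established earlier: each polynomial-zonotope reachable set is known to \emph{contain} the true quantity it models for every $t$ in its time subinterval and for every admissible realization of the tracking error and the inertial-parameter uncertainty, so any constraint verified on the entire reachable set is automatically satisfied by the realized trajectory.

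First I would fix a feasible $k \in K$ and slice every reachable set by $k$. By definition of feasibility the sliced sets $\pzqjki$, $\pzqdjki$, $\pzki{\FO_j}$, $\pzujki$, and $\pzki{\cwrenchPZ}$ satisfy \eqref{eq:pz_optpos}--\eqref{eq:pz_opttipcon} for all $i \in \{1,\dots,\nt\}$ and all $j$. I would then recall the containment facts. Corollary~\ref{cor:tracking_error} bounds the position and velocity errors by $\pboundj$ and $\vbound$, which are exactly the quantities built into $\pzqji$ and $\pzqdji$ through \eqref{eq:pz_pos}--\eqref{eq:pz_vel}; hence $q_j(t;k)\in\pzqjki$ and $\dot q_j(t;k)\in\pzqdjki$ for all $t\in\pz{T_i}$. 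Propagating these containments through Theorem~\ref{thm:FK_overapprox}, the forward-occupancy overapproximation, Theorem~\ref{thm:robust_input_bound} together with the input reachable-set result, and the wrench reachable-set result yields $\FO_j(q(t;k))\in\pzki{\FO_j}$, $u(t;k)\in\pzujki$, and $\cwrench(t;k)\in\pzki{\cwrenchPZ}$ for all $t\in\pz{T_i}$.

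Next I would convert each per-set constraint into a per-point continuous-time constraint. Because $\pz{T_i}$ represents the subinterval $[(i-1)\timestep,\, i\timestep]$ through the indeterminate $\tvar\in[-1,1]$, the containments above hold for \emph{every} $t$ in that subinterval rather than at a single sample; and since $\bigcup_{i=1}^{\nt}\pz{T_i}=\pz{T}=[0,\tfin]$, the continuous-time constraints hold for all $t\in\pz{T}$. Concretely, the subset conditions on the joint, velocity, and torque reachable sets give \eqref{eq:optpos}--\eqref{eq:opttorque}; the empty intersection of $\pzki{\FO_j}$ with $\obsset$ gives \eqref{eq:optcolcon}; and the squared separation, slipping, and tipping inequalities---together with the theorem stating that the polynomial-zonotope constraints overapproximate their continuous counterparts---give \eqref{eq:optsepcon}--\eqref{eq:opttipcon}. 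With \eqref{eq:optsepcon}--\eqref{eq:opttipcon} verified for all $t$, Theorem~\ref{thm:constraint_satisfaction} yields no relative motion of the object on the tray (which is rigidly grasped by the end-effector under Assumption~\ref{assum:contact}), while \eqref{eq:optcolcon} and \eqref{eq:opttorque} supply collision avoidance and torque-limit satisfaction, completing the claim.

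The main obstacle is the step that upgrades the finitely many set constraints (one per subinterval index $i$) into a genuine continuous-time guarantee. This hinges on the reachable sets being built from the \emph{set-valued} time element $\pz{T_i}$ rather than a discrete sample, so that verifying a constraint against the sliced set is equivalent to verifying it against the entire continuum of times in the subinterval; I would make explicit that the $\nt$ subintervals tile $[0,\tfin]$ and that the conservative incorporation of worst-case tracking error (via $\pbound,\vbound$) and parameter uncertainty (via $[\Delta]$) is exactly what makes every containment hold uniformly over $t$ and over all $\Delta\in[\Delta]$. Additional care is needed with the squared reformulations of the slipping and tipping constraints: I would verify that satisfaction of each polynomial-zonotope squared inequality implies satisfaction of the original norm-form inequality for every realized wrench in $\pzki{\cwrenchPZ}$, so that the hypotheses of Theorem~\ref{thm:constraint_satisfaction} are genuinely met.
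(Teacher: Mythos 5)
Your proposal is correct and follows essentially the same route the paper intends: the theorem is meant to follow by chaining the overapproximation results (trajectory, forward kinematics/occupancy, input, and wrench reachable sets, plus the constraint-overapproximation theorem) over the tiling of $[0,\tfin]$ by the subintervals $\pz{T_i}$, and then invoking Theorem~\ref{thm:constraint_satisfaction} for the no-relative-motion conclusion; the paper itself leaves this chaining implicit rather than writing it out. Your closing caveats --- that the squared slipping/tipping reformulations must be checked to imply the original norm inequalities (in particular that $d\neq 0$, which requires the separation constraint to give strict positivity of the normal force), and that the tracking-error bounds presuppose $e(0)=\dot e(0)=0$ from Corollary~\ref{cor:tracking_error} --- are exactly the details a full write-up would need to make explicit.
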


The implementation of this optimization problem is summarized in Alg. \ref{alg:opt}.
Note in particular that we apply Rem. \ref{rem:combining_traj} and Lem. \ref{lem:pz_q_bound} to compute the polynomial zonotope overapproximations of the parameterized trajectories and the actual trajectory of the system for each time interval $i \in \Nt$ in parallel in Line \ref{lin:traj}.
In addition, note that one can calculate the analytical gradients of the cost function and constraints and provide them to an optimization solver.
One can use polynomial differentiation of the polynomial zonotope representations of the constraints to compute the required gradients \cite[Sec. IX-C]{michaux2023}.
In conjunction with \TROtitle{}, additional constraints can be added to the optimization problem such that the parameterized trajectory is guaranteed collision-free and can be tracked while satisfying joint and input limits \cite[Sec. IX-B]{michaux2023}.
Thus, a solution to this optimization problem results in a safe manipulation trajectory.

\begin{algorithm}[t]
\small
\begin{algorithmic}[1]
    
    
    
    


    \State{\bf Parfor} $i = 1:\Nt$ 


    \State\hspace{0.1in} $\{ \pzqi, \ldots, \pzqdddesi  \} \leftarrow \setop{PZ}(\initq,\initdq,\initddq)$ // Sec. \ref{sec:traj_pz} \label{lin:traj}

    
    \State\hspace{0.1in} // object wrench reachable set using Alg. \ref{alg:PZRNEA} //

    \State\hspace{0.1in} $\{\pz{w\jss}(\pzqAi, \intparams)\}_{j \in \Ne} = ~\texttt{PZRNEA}(\pzqAi, \intparams)$ 

    \State\hspace{0.1in} // constraints using Lem. \ref{lem:PZSlipConstraints} and $\pz{w^{n_q+1}_{n_q+1}}(\pzqAi, \intparams)$ //
    
    \State\hspace{0.1in} {\bf Compute} $\pz{\sep}, \pz{\slip},$ and $\pz{\tip}$

    \State\hspace{0in}{\bf End Parfor}

    \State{\bf Try:} $k^* \gets$ solve \eqref{eq:pz_optcost} -- \eqref{eq:pz_opttipcon}
    \State{\bf Catch:} (if $t\elapsed > t\plan$), {\bf then}  $k^* = \texttt{NaN}$ // $t\elapsed$ measures the amount \phantom{dumb} of time since $\numop{Opt}$ was called //

    
    \end{algorithmic}
\caption{\small ${k^* = \numop{Opt}(\initq,\initdq,\initddq,\numop{cost},t\plan, \Nt, \Delta_0, [\Delta],\pbound,\vbound)}$}
\label{alg:opt}
\end{algorithm}

\vspace*{-0.1cm}
\subsection{\MethodName{}'s Online Operation}
\label{subsec:OnlinePlanning}

\begin{algorithm}[t]
\small
\begin{algorithmic}[1]
    \State {\bf Require:} $t\plan > 0$, $\Nt \in \N, [\Delta],\Delta_0 \in [\Delta],$ $\texttt{cost}: K \to \R,$, $\pbound$, $\vbound$
    \State {\bf Initialize:} $l = 0$, $t_j = 0$, and \newline
    \phantom{\bf Initialize:} $k^*_l = \texttt{Opt}(\qstart,\zeros,\zeros,\obsset,\texttt{cost}, t\plan, \Nt, \Delta_0, [\Delta],\pbound,\vbound)$
    \State {\bf If} $k^*_{l} = \texttt{NaN}$, {\bf then} execute brake 
    \State{\bf Loop:}

        \State \hspace{0.05in} // Line \ref{lin:apply} executes simultaneously with Lines \ref{lin:opt} -- \ref{lin:else} //

        \State \hspace{0.05in}  // Use Lem. \ref{lem:tracking_bounds} //
        \State \hspace{0.05in}  {\bf Apply} feedback controller  to robot for $t \in [t_l,t_l + t\plan]$ \label{lin:apply} 

        \State \hspace{0.05in}  $k^*_{l+1} = \texttt{Opt}(q_d(t\plan;k^*_l), \dot{q}^{j}_d(t\plan;k^*_l), \ddot{q}^{j}_d(t\plan;k^*_l),\texttt{cost},t\plan, \Nt, \Delta_0,$ \newline \phantom{stupid stupid } $[\Delta], \pbound, \vbound)$ \label{lin:opt}
        \State \hspace{0.05in} {\bf If} \label{lin:no_k} $k^*_{l+1} = \texttt{NaN}$, {\bf then} execute brake and break loop
        \State \hspace{0.05in} {\bf Else}  $t_{l+1} \leftarrow t_l + t\plan$ and $j \leftarrow j + 1$ \label{lin:else}

    \State{\bf End}
\end{algorithmic}
\caption{\small \MethodName{} Online Planning and Control}
\label{alg:armtdForcePlan}
\end{algorithm}

The online operation of \MethodName{} is summarized in Alg. \ref{alg:armtdForcePlan}.
Because \MethodName{} operates in receding horizon fashion, the controller in Lem. \ref{lem:tracking_bounds} is used to track the trajectory parameter computed during the previous planning iteration on Line \ref{lin:apply}. 
At the same time, \MethodName{} computes the trajectory parameter for the following planning iteration on Line \ref{lin:opt}.
Recall that if a new trajectory parameter is not found before time $t_{p}$, then the robust controller tracks the braking maneuver of the previous trajectory to bring the robot to a safe stop, seen on Line \ref{lin:no_k}.
Further, by applying Lem. \ref{lem:tracking_bounds} and Thm. \ref{thm:optProbSafe}, one can prove that \MethodName{} generates behavior that is dynamically feasible and results in no $\relativeMotion$.

\section{Experiments}
\label{sec:experimental}

This section details the implementation and testing of the \MethodName{} framework on a Kinova Gen3 7 DOF robotic arm.
\MethodName{} is implemented in C++ and CUDA and a MEX version is used in MATLAB for simulation experiments. 
The code can be found at \url{https://github.com/roahmlab/waitr-dev}.
Simulation experiments were run using a AMD Ryzen 9 5950x processor and an NVIDIA RTX A6000 GPU.
Two hardware experiments were run using an AMD Ryzen 9 3950x processor and NVIDIA Quadro RTX 8000 GPU.


\subsection{Trajectory Creation}
\label{sec:pzTrajImplementation}

	This work parameterizes the reference trajectories by using a set of degree 5 Bernstein polynomials as is done in \cite[Sec. IX-A]{michaux2023}.
	Letting $T=[t_{0},t_{f}]$, WLOG, the reference trajectories take the form $ \qdeskj = \sum_{l= 0}^{5} \beta_{j, l} b_{j, l}(t)$,
	where $\beta_{j, l} \in \R$ are the Bernstein Coefficients and $b_{j, l}: T \to \R$ are the Bernstein Basis Polynomials of degree 5 
    for each $l \in \{0, \ldots, 5\}$.    
    As a result of Rem. \ref{rem:combining_traj}, the initial position, velocity, and acceleration of a parameterized trajectory are constrained. 
	Further, the parameterized trajectories are constrained to have zero velocity and acceleration by $\tfin$ in order to bring the robot to a stop.
	Therefore, one can show that five of the six Bernstein coefficients $\beta_{j, \nu}$ are known and only the last coefficient can be optimized over. 
    We let the last coefficient $\beta_{j, 5} = \kjscale \kj + \kjoffset$ 
    where $\kjscale$ and $\kjoffset \in \R$ are user-specified constants.
    The choice of trajectory parameter $k_j$ directly determines the coefficient $\beta_{j, 5}$ and the final position of the robot.

\subsection{Implementation Details}

    \subsubsection{Robot Model and Environment}
    \label{subsec:experimental_model}
                
        For simulation, the tray mass was $0.044$ kg and the object mass was $0.172$ kg.
        For the first hardware experiment, the tray mass was $0.058$ kg and the object mass was $0.048$ kg.
        For the second hardware experiment, the tray mass was $0.354$ kg and the object mass was $0.592$ kg.
        These are added to the robot model as described in Sec. \ref{sec:modeling}.
        By sampling, the minimum eigenvalue of the mass matrix was determined to be uniformly bounded from below by $\sigma_m = 8.0386$. 
        
        
    \subsubsection{Trajectories}

        In all experiments, $t_{p} = 1.0$s and $\tfin = 2.0$s.
        In simulation experiments, we let $\kjoffset = \initqj$ and $\kjscale = \frac{\pi}{72}$, so that after $\tfin = 2.0$s the final position of any joint trajectory can differ from its initial position by up to $\pm\frac{\pi}{72}$ radians.
        For hardware experiments, we let $\kjoffset = \initqj$ and $\kjscale = \frac{\pi}{32}$ for $j \in {1,2,6}$ and $\kjscale = \frac{\pi}{72}$ for $j \in {3,4,5,7}$. 
        
    \subsubsection{Tracking Error Bound}
        
        We use the same controller as presented in \cite[Sec. VII]{michaux2023}, with a different $K_{r}$, $\roblevel$ and $\sigma_m$, which are reported in App. D\footnote{Supplementary Appendices found at https://roahmlab.github.io/waitr-dev/}.
    

    \subsubsection{High-level Planners}
    
        We use a cost function that minimizes the distance between $q(t_{p};k)$ and an intermediate waypoint.
        Simulation experiments are run with two different high level planners.
        The first is a straight line high-level planner (SL-HLP), which calculates a waypoint along a straight line between the start and goal in configuration space without checking for collision.
        The second is a graph-based high level planner (GB-HLP), which is constructed using robot configurations with a flat end-effector pose.
        Before operation, the configurations that are in collision with obstacles are removed from the graph. 
        Note that the high-level planner does not need safety guarantees because \MethodName{}'s safety guarantees are independent of the high-level planner.
        
    \subsubsection{Comparison Framework}

        We compare \MethodName{} to a previous method \TROtitle{}, which does not contain contact wrench constraints, by running both methods on identical simulation worlds with the same high level planner. 
        
        
    \subsubsection{Trajectory Optimization Implementation}
        
        The \MethodName{} C++ framework uses IPOPT \cite{Wchter2006OnTI} to solve the trajectory optimization problem during each planning iteration.
        Analytic gradients/subgradients of the cost function and constraints are provided for the optimization solver to speed computation.

\vspace*{-0.15cm}
\subsection{Simulation Experiments}

    \subsubsection{Simulation Setup}
        The simulation experiment consisted of testing \MethodName{} with two different high-level planners in 100 trials where each was initialized with a random feasible start and goal state.
        Each trial had 10 box shaped obstacles randomly placed in the workspace and randomly scaled side lengths that were allowed to vary between $0.010$ to $0.050$ m.
        
    \subsubsection{Results}
    
        \begin{table}
            \centering
            \begin{tabular}{c|c c c}
                 Method & Goals Reached & Safe Stops & Violations \\
                 \hline
                 \MethodName{} + SL-HLP & 44 & 56 & {\bf 0} \\ 
                 \hline
                 \MethodName{} + GB-HLP & {\bf 91} & {\bf 9} & {\bf 0} \\ 
                 \hline
                 \TROtitle{} \cite{michaux2023} + GB-HLP & 64 & 30 & 6 \\ 
            \end{tabular}
            \caption{Results on 100 random simulation experiments. 
            \vspace*{-0.75cm}
            }
            \label{tab:expResults}
        \end{table}
    
        The results of the 100 random trials are presented in Tab. \ref{tab:expResults}, with example videos available at \url{https://roahmlab.github.io/waitr-dev/}.
        \MethodName{} robustly handles infeasible waypoints from the SL-HLP while still ensuring safety, at the cost of not always making it to the goal.
        When using the GB-HLP, which provides feasible waypoints, \MethodName{} reaches the goal 91 times and safely stops 9 times. 
        In contrast, even though \TROtitle{} uses the GB-HLP, and is given feasible waypoints, it reaches the goal only 64 times, safely stops 30 times, and violates the contact constraints 6 times. 
        This shows that it is not adequate to have a high-level planner that can provide feasible waypoints.

\subsection{Hardware Results}

    \subsubsection{Setup}
    The first experiment compares \MethodName{} and \TROtitle{} on a Kinova Gen3 robot arm.
    The tray surface is covered with Duct tape to adjust the coefficient of friction, which was experimentally measured to be $\mu=0.36$.
    Both frameworks used the GB-HLP, and had the same random noise added to the last three joints in the waypoints selected.
    This randomly adjusted the orientation of the tray in each waypoint such that it was not flat. 
    Two obstacles were placed randomly in the workspace but at the same location when running each approach.
    This ensures that the only difference between the experiments for the two frameworks is the constraints that they are enforcing.
    The second experiment demonstrates the ability of \MethodName{} to operate under the presence of modeling uncertainty.
    The object for this experiment was a plastic cup filled with metal nuts.
    The coefficient of friction between the plastic cup and the tray was measured to be $\mu=0.60$.
    \MethodName{} was run in three consecutive trials which each had random placed obstacles.
    Before the last trial, the mass of the cup was adjusted by removing $0.028$ kg, or $4.73\%$ of the total mass of the unsecured object.
    This also adjusted the inertia by $4.73\%$.
    \MethodName{} was set to account for $5\%$ variation in the mass and inertia of the object, and the model used by the \MethodName{} framework was not adjusted to reflect the changed object parameters.
    
    \subsubsection{Results}
    Videos of the hardware experiments are available at https://roahmlab.github.io/waitr-dev/.
    The first hardware experiment shows \MethodName{} successfully manipulating an unsecured object while navigating around two randomly placed obstacles.
    \TROtitle{} also successfully navigates around the obstacles, but fails to manipulate the unsecured object, resulting in the object slipping and falling off the tray.
    The second trial demonstrates that \MethodName{} can successfully operate in the presence of uncertainty in the inertial parameters of the object being manipulated.
    All of the hardware experiments were run in real-time. 
\section{Conclusion}
\label{sec:conclusion}

This paper describes \MethodName{}, a real-time provably safe planning and control framework for collision-free non-prehensile manipulation capable of robustly dealing with uncertainty in both the manipulator's and target object's inertial parameters. 
There are several directions of future work for the \MethodName{} framework.
The first is enabling \MethodName{} to handle changes in the contact state of the objects being manipulated, such as allowing manipulated objects to slip on the tray.
The second is to form constraints that can guarantee safe force closure on objects throughout a desired trajectory.


\renewcommand{\bibfont}{\normalfont\footnotesize}
{\renewcommand{\markboth}[2]{}
\printbibliography}


\end{document}